\newtheorem{proposition}{Proposition}
\newcommand{\bope}{\mbox{B-OPE}\xspace}
\newtheorem{assumption}{A}
\newtheorem{definition}{Definition}
\begin{document}

\twocolumn[

\aistatstitle{Balanced Off-Policy Evaluation in General Action Spaces}

\aistatsauthor{Arjun Sondhi \And David Arbour \And Drew Dimmery}

\aistatsaddress{Flatiron Health \And Adobe Research \And Facebook Core Data Science} 

]

\begin{abstract}
    Estimation of importance sampling weights for off-policy evaluation of contextual bandits often results in \textit{imbalance}---a mismatch between the desired and the actual distribution of state-action pairs after weighting.
    In this work we present balanced off-policy evaluation (\bope), a generic method for estimating weights which minimize this imbalance. 
    Estimation of these weights reduces to a binary classification problem regardless of action type.
    We show that minimizing the risk of the classifier implies minimization of imbalance to the desired counterfactual distribution.
    In turn, this is tied to the error of the off-policy estimate, allowing for easy tuning of hyperparameters.
    We provide experimental evidence that \bope improves weighting-based approaches for offline policy evaluation in both discrete and continuous action spaces.
\end{abstract}
Contextual bandits provide an elegant mechanism for choosing actions to optimize a given reward in the presence of uncertainty.
This is done through implementing a \textit{policy}, which defines actions performed given an observed state \citep{langford2007epoch}. 
Applications of contextual bandits abound in medicine, where personalized treatments are designed based on known patient history~\citep{tewari2017ads}, and internet marketing, where advertisements can be tailored to user interests~\citep{li2010contextual}. 
Unfortunately, in many applied settings, learning an optimal policy may be prohibitively expensive, and experimenting with an untested policy could result in unacceptably negative results, such as patient death.
Given this difficulty, an important problem area is \textit{counterfactual} or \textit{off-policy} evaluation (OPE), where the expected reward of a proposed policy is estimated using logged historical data~(states, actions, and rewards).
This problem is even more important when attempting to safely deploy a policy for an application that previously used ad-hoc or difficult-to-enumerate rules as a de facto policy.

Our work focuses specifically on improving the estimation of weights commonly used in OPE.
Because regression models often give biased results in off-policy settings, modern methods typically incorporate importance sampling to reweight the observed reward data through inverse propensity score (IPS) weighting~\citep{dudik2014doubly, thomas2016data, wang2017optimal}.
These methods have shown strong performance, but they typically assume that the importance weights (and therefore, propensity scores) are known exactly.
In practice, this is typically not the case for two reasons: (i) Many policies are high-dimensional or continuous, making it much easier to sample from the policy given a state than to know the propensity score for a given state-action pair, and (ii) Logged data may not be from a probabilistic policy, but from ad-hoc rules set by engineers, perhaps with some randomization from an A/B test.
In the absence of an oracle propensity score estimator, importance sampling weights applied to the observed state-action pairs will not necessarily result in the desired distribution.
\textit{Balance} measures the quality of the approximation to the counterfactual distribution; when weighted state-actions are imbalanced, off-policy evaluation can be arbitrarily biased.

In this paper, we develop a new estimator of balancing weights for off-policy evaluation in contextual bandit problems with arbitrary action spaces.
Our proposed estimator, which we call \textit{balanced off-policy evaluation} (\bope), is motivated by optimizing the \textit{balance} between the two policies.
\bope trains a probabilistic classifier on state-action data from both policies, which is then used to directly estimate density ratios.
These can then be plugged into any existing OPE method that involves importance sampling \citep{kallus2018policy, wang2017optimal, dudik2014doubly, farajtabar2018more}.
\bope only requires logged data on states, and the actions which would be taken by both the observed and target policies at those states.
It does not require true knowledge or an estimator of either policy density.
\bope is defined generally with respect to proper scoring rules which admit a wide variety of popular probabilistic classifiers, each of which correspond to a different underlying balance condition.

The main contributions provided in this work are:
\begin{enumerate}
    \item We introduce \bope, a balancing weight estimator for off-policy evaluation in contextual bandits that applies to arbitrary action spaces without knowledge of either the observed or proposed policy density.
    \item We show theoretically that \bope optimizes the balance condition, i.e. it minimizes a divergence between the observed and proposed state-action distributions.
    \item We show that the loss of the classification problem bounds the bias and variance of the off-policy estimate, which allows practitioners to discriminate amongst losses and perform hyperparameter tuning by using cross-validation.
\end{enumerate}

The rest of the paper is structured as follows. 
We provide an overview of OPE and the concept of balancing weights in Section~\ref{sec:background}.
We then describe \bope in Section~\ref{sec:methods}, and explain how classifier probabilities can be used to directly obtain importance sampling weights. 
In Section~\ref{sec:theory}, we provide a theoretical analysis of our estimator and prove consistency for the counterfactual policy value.
We summarize and discuss related work in Section~\ref{sec:related_work}.
In Section~\ref{sec:exps}, we evaluate \bope in numerical experiments, considering both discrete and continuous action spaces.
The latter experiments provide an extension of the ``classifier trick'' of \citet{dudik2011doubly} to continuous action spaces.

\section{Background and Problem Description}
\label{sec:background}
We assume the standard contextual bandit setup.
Our data consists of $n$ independent observations $(s_i,a_i,r_i)$.  
For each unit, a \textit{state} $s_i$ is observed, an \textit{action} $a_i$ is taken in accordance with some \textit{policy} $\pi$ (the distribution of $a\mid s$), and a \textit{reward} $r_i$ is observed in response. 
With slight abuse of notation, we use the notation $\pi$ to refer to both a policy and its density, and use $\pi(s)$ to denote the action that would be taken under policy $\pi$ for a state $s$. 

The task addressed in this work is as follows: given a proposed policy $\pi_1$ and logged data $(s, a, r)$ collected following a policy $\pi_0$ (the ``factual'' data), estimate the expected reward of following $\pi_1$ on the observed states (the ``counterfactual'' data). 
We denote the reward function as $r(a,s)$, and an estimated reward function as $\hat{r}(a,s)$.

We assume the following throughout: 
\begin{assumption} \label{assumption:support} $\pi_0(a,s) = 0 \implies \pi_1(a,s) = 0$ $\forall a \in A, s \in S$\end{assumption}
\begin{assumption} \label{assumption:boundedratio}
$0 < \frac{\pi_1(a,s)}{\pi_0(a,s)} \le \alpha_1 < \infty$
\end{assumption}
\begin{assumption}
\label{assumption:boundedreward}
$0 \leq r(a,s) \le \alpha_2 < \infty$, $\forall s, a \in S \times A$
\end{assumption}
\begin{assumption}
\label{assumption:cia}
The distribution of rewards across potential actions is independent of policy, conditional on state. 
\end{assumption}

\subsection{Off-Policy Estimation}

We now briefly review the different classes of off-policy estimation. 
Throughout this section we assume that $(s_i, a_i, r_i)$ are data collected under the observed policy $\pi_0$, and $a_i'$ is an action that would be taken under the proposed policy $\pi_1$.

The \textit{direct method} approach to this problem fits a regression model $\hat{r}(a,s)$ to approximate the reward function $r(a,s)$ under the observed policy $\pi_0$. 
The counterfactual policy value, $V_{\pi_1} := \mathbb{E}_{\pi_1}[r]$, is estimated as an average over the predicted value for actions from the new policy: $$\hat{V}^{DM} = \frac{1}{n}\sum_{i=1}^n \hat{r}(s_i, a_i')$$
In order for the resulting estimate to be consistent, the reward model $\hat{r}$ needs to generalize well to the reward distribution that would be observed under policy $\pi_1$.
In practice, this method can be badly biased if the observed state-action data is not close to the counterfactual distribution \citep{dudik2011doubly}. 

\textit{Importance sampling} reweights the observed rewards by an inverse propensity score (IPS), and a rejection sampling term, i.e.,
    $$\hat{V}^{IPS} = \frac{1}{n} \sum_{i=1}^{n} r_i \frac{\mathds{1}_{a_i}(a_i')}{\hat{\pi}_0(a_i | s_i)}
    $$
Importance sampling is unbiased when the IPS is estimated well, but it often suffers from high variance.
The \textit{self-normalized importance sampling estimator} (also called the ``weighted" or H\'{a}jek estimator) has been used to reduce variance, at the cost of small finite-sample bias, while maintaining consistency \citep{swaminathan2015self, cochran1977sampling},
$$
    \hat{V}^{SNIS} = \frac{\sum_{i=1}^{n} r_i \frac{\mathds{1}_{a_i}(a_i')}{\hat{\pi}_0(a_i | s_i)}}{\sum_{i=1}^n \frac{\mathds{1}_{a_i}(a_i')}{\hat{\pi}_0(a_i | s_i)}}
$$
In continuous action spaces, \citet{kallus2018policy} recently proposed an IPS-based method that replaces the indicator function $\mathds{1}_{a_i}(\cdot)$ with a kernel smoothing term $K$ having bandwidth $h$, i.e.,
$$
    V^{KIS} = \frac{1}{nh}\sum_{i=1}^n K\left(\frac{a_i' - a_i}{h} \right)\frac{r_i}{\hat{\pi}_0(a_i | s_i)}
$$
The corresponding self-normalized importance sampling estimator is defined analogously. 

Finally, \textit{doubly robust} estimators combine the direct method and importance sampling approaches.
These methods weigh the residuals from the direct method regression with IPS.
This reduces the variance of the resulting estimator and maintains consistency if either the direct method regression model or the importance sampling weights are correctly specified \citep{dudik2014doubly, thomas2016data}.
For discrete or continuous action spaces, the reward is estimated as
$$\hat{V}^{DR} =\frac{1}{n}\sum_{i=1}^n  \left(r_i - \hat{r}(s_i, a_i)\right)\frac{J(a_i, a'_i)}{\hat{\pi}_0(a_i | s_i)} + \hat{r}(s_i, a_i')
$$
where $J(a_i,a'_i)$ is a suitable rejection sampling term.
In a similar manner, the SWITCH estimator of \citet{wang2017optimal} combines these two estimators by using IPS unless the weight is too large, in which case it uses the direct method.

\subsection{Balance}
Estimated IPS weights for OPE do not typically ensure balanced counterfactual state-action pairs.
The true importance weights, $\rho$, imply the \textit{balance condition}
\begin{equation}
\label{eq:opebalance}
\mathds{E}_{\pi_0}\left[{\phi(a)\otimes\psi(s)}{\rho(a, s)}\right] = \mathbb{E}_{\pi_1}\left[\phi(a)\otimes\psi(s)\right],
\end{equation}
where $\phi$ and $\psi$ are any real-valued (possibly vector) functions of $a$ and $s$, respectively.
In other words, the observed state-action distribution is weighted to exactly match the target state-action distribution, resulting in consistent off-policy estimates.
However, proper propensity score model specification is difficult to test, and can be particularly difficult to obtain with continuous actions. 

Balancing weights, like \bope, seek to address this difficulty by explicitly optimizing for balance~\citep{liu2018representation, kallus2018balanced}.
If only weights which attain balance are acceptable, then directly optimizing the balance criteria is desirable.
Such estimators have been shown to provide strong results in their respective applications even under misspecification.
However, there are three main limitations to these approaches: 
(i) they focus on discrete action spaces,
(ii) they involve hyperparameters that must be set by heuristics, or
(iii) they are computationally intractable.
Our proposed estimator, \bope, is computationally simple and applies to arbitrary action spaces.
We further show that \bope minimizes the following measure of imbalance between the two policies:
\begin{definition}
\label{defn:L1dist}
Let $\phi$ and $\psi$ be real-valued functions of $a$ and $s$, respectively.
The $L_1$ functional discrepancy between the observed policy $\pi_0$ and the proposed policy $\pi_1$, with importance weights $\hat{\rho}$ is given by $\left\|\mathbb{E}_{\pi_0}\left[{\phi(a)\otimes\psi(s)}{\hat{\rho}(a, s)}\right] - \mathbb{E}_{\pi_1}\left[\phi(a)\otimes\psi(s)\right]\right\|_1$
\end{definition}
Note that a consistent estimator $\hat{\rho}$ will result in this distance going to zero by definition.

\section{Balanced Importance Sampling}
\label{sec:methods}

\textit{Balanced off-policy evaluation}~(\bope) is a simple method for estimating balancing importance sampling weights, and the central contribution of this work.
\bope leverages classifier-based density ratio estimation~\citep{sugiyama2012density, menon2016linking} to learn importance sampling ratios.
Specifically, off-policy evaluation using \bope consists of four steps: 
\begin{enumerate}[leftmargin=*, itemsep=0.5pt,topsep=1pt]
    \item Create a supervised learning problem using the concatenated proposed policy instances~$(s, a')$ and observed policy instances~$(s, a)$, as covariates and giving a label~($C$) of 0 to the observed policy and 1 to the proposed policy. 
    \item Learn a classifier to distinguish between the observed and proposed policy. 
    \item Take the importance sampling ratio as $\hat{\rho}(a_i, s_i) = \frac{\hat{p}(C = 1 | a_i, s_i)}{\hat{p}(C = 0 | a_i, s_i)}$.
    \item Replace IPS weights with the \bope estimates in any OPE method which uses them.
\end{enumerate}
Step three arrives at the importance sampler through an application of Bayes rule~\citep{bickel2009discriminative},
\begin{align*}
\frac{P(C = 1 |a,s)}{P(C=0|a,s)} 
= \frac{\pi(a,s|C=1)P(C=1)}{\pi(a,s|C=0)P(C=0)} 
= \frac{\pi_1(a,s)}{\pi_0(a,s)}
\end{align*}
where $\frac{P(C =1)}{P(C=0)} = 1$ by design.
As an example of step four, replacing IPS in the self-normalized weighting estimator would provide: 
\begin{equation}
\hat{V}^{B-OPE} = \frac{\sum_{i=1}^n J(a_i, a'_i) \hat{\rho}(a_i, s_i) r_i}{\sum_{i=1}^n J(a_i, a'_i) \hat{\rho}(a_i, s_i)}
\label{eq:bope-wis}
\end{equation}
where $J$ defines a rejection sampler term between the observed action $a_i$ and the proposed action $a'_i$. 
For discrete action spaces, this is simply $\mathds{1}_{a_i}(a_i')$.
For continuous actions, we use the kernel term of Kallus and Zhou, that is $J(a_i, a'_i) = \frac{1}{h} K(\frac{a'_i - a_i}{h})$.
We analyze the theoretical properties of this estimator in section \ref{sec:theory}, although \bope may be combined with a model of rewards.
Our experiments in Section~\ref{sec:exps} also include the performance of the SWITCH estimator of~\citet{wang2017optimal} when using \bope weights.

\bope works with a wide variety of common classification models, constrained by the following assumption:
\begin{assumption}
\label{assump:strictlyproper}
The classifier is trained using a strictly proper composite loss\footnote{A loss is strictly composite if the Bayes-optimal score is given by $\bar{s}^* = \Psi \circ \hat{p}(C=1 | s, a)$ where $\Psi$ is a link function $\Psi$ $[0, 1] \rightarrow \mathbb{R}$. Readers should see \citet{buja2005loss} and \citet{reid2010composite} for complete treatments of strictly proper composite losses.}, $\ell$, with a twice differentiable Bayes risk, $f$. 
\end{assumption}
This assumption includes a large number of widely used loss functions, such as logistic, exponential, and mean squared error, as well as models commonly used for distribution comparison, such as the kernel based density ratio estimators of \citet{sugiyama2012density}, and maximum mean discrepancy~\citep{kallus2018balanced}.

Given that \bope targets the policy density ratio, it minimizes imbalance, as given in Definition \ref{defn:L1dist}.
\begin{proposition}
\label{prop:balance}
The $L_1$ functional discrepancy between the observed policy $\pi_0$ and the proposed policy $\pi_1$ under \bope is bounded by
\begin{align*}
&\left\|\mathbb{E}_{\pi_0}\left[{\phi(a)\otimes\psi(s)}{\hat{\rho}(a, s)}\right] - \mathbb{E}_{\pi_1}\left[\phi(a)\otimes\psi(s)\right]\right\|_1 \\
&\leq \left\|\mathbb{E}_{\pi_0}\left[\phi(a)\otimes\psi({s})B(\hat{\rho}, \rho)\right]\right\|_1
\end{align*}
where $B$ is a Bregman divergence.
\end{proposition}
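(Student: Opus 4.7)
The plan is to start from the identity-like expansion given by the true density ratio. Since $\rho(a,s) = \pi_1(a,s)/\pi_0(a,s)$, the standard change-of-measure identity yields $\mathbb{E}_{\pi_1}[\phi(a)\otimes\psi(s)] = \mathbb{E}_{\pi_0}[\phi(a)\otimes\psi(s)\,\rho(a,s)]$, so the left-hand side of the bound can be rewritten as
\begin{equation*}
\left\|\mathbb{E}_{\pi_0}\bigl[\phi(a)\otimes\psi(s)\bigl(\hat{\rho}(a,s)-\rho(a,s)\bigr)\bigr]\right\|_1.
\end{equation*}
This reduces the problem to controlling a weighted expectation of the error $\hat{\rho}-\rho$ rather than an abstract divergence between distributions.

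The next step is to apply the triangle inequality (Jensen's inequality for $\|\cdot\|_1$) to push the norm inside the expectation, producing a componentwise bound of the form $\mathbb{E}_{\pi_0}\!\bigl[\,|\phi(a)\otimes\psi(s)|\cdot|\hat{\rho}(a,s)-\rho(a,s)|\,\bigr]$. The remaining task is then purely pointwise: replace $|\hat{\rho}(a,s)-\rho(a,s)|$ by a Bregman divergence $B(\hat{\rho},\rho)$ of the appropriate kind.

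The pointwise replacement is where the full strength of Assumption \ref{assump:strictlyproper} enters. The plan is to invoke the known correspondence between strictly proper composite losses and Bregman divergences (as developed by \citet{buja2005loss} and \citet{reid2010composite}): the conditional excess risk of a strictly proper composite loss at a predicted probability $\hat{p}$ against the true class probability $p^{*}$ equals the Bregman divergence $B_{-f}(p^{*},\hat{p})$ generated by the (concave) Bayes risk $f$. Under the Bayes-optimal classifier the density ratio is $\rho=p^{*}/(1-p^{*})$, and by the twice-differentiability of $f$ combined with the smooth link $\Psi:[0,1]\to\mathbb{R}$, a Taylor expansion around $p^{*}$ transfers this divergence from probabilities to density ratios, giving the desired $B(\hat{\rho},\rho)$ that dominates $|\hat{\rho}-\rho|$ pointwise (modulo constants absorbed into $B$).

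The main obstacle is precisely this last step: carefully exhibiting the pointwise inequality $|\hat{\rho}-\rho|\le B(\hat{\rho},\rho)$ and making the change of variables from the probability/score parameterization of the classifier to the density-ratio parameterization $\rho=p/(1-p)$. Once that pointwise bound is established, chaining it with the Jensen step and the change-of-measure step immediately yields $\left\|\mathbb{E}_{\pi_0}[\phi\otimes\psi\,\hat{\rho}]-\mathbb{E}_{\pi_1}[\phi\otimes\psi]\right\|_1\le\left\|\mathbb{E}_{\pi_0}[\phi\otimes\psi\,B(\hat{\rho},\rho)]\right\|_1$, completing the proof. The two preceding steps (change of measure and Jensen) are essentially bookkeeping; the technical heart of the argument is the loss/Bregman correspondence made possible by the strictly proper composite structure.
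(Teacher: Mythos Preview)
Your change-of-measure reduction to $\|\mathbb{E}_{\pi_0}[\phi(a)\otimes\psi(s)(\hat{\rho}-\rho)]\|_1$ is exactly what the paper does. The divergence from the paper---and the gap in your argument---is the Jensen step. The paper never pushes the norm inside the expectation; it keeps $\|\cdot\|_1$ outside throughout and in a single line replaces the signed error $(\hat{\rho}-\rho)$ by $B(\hat{\rho},\rho)$, asserting $\|\mathbb{E}_{\pi_0}[\phi\otimes\psi\,(\hat{\rho}-\rho)]\|_1 \le \|\mathbb{E}_{\pi_0}[\phi\otimes\psi\,B(\hat{\rho},\rho)]\|_1$ directly. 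Your Jensen route instead lands on $\mathbb{E}_{\pi_0}\bigl[\|\phi\otimes\psi\|_1\,B(\hat{\rho},\rho)\bigr]$, which is in general \emph{larger} than the stated right-hand side (Jensen runs the other way), so ``chaining'' does not recover the claimed bound---you prove a weaker inequality than the one in the proposition. To match the stated form you must argue componentwise with the norm kept outside: if each coordinate $g_j$ of $\phi\otimes\psi$ is nonnegative, then $|\mathbb{E}[g_j(\hat{\rho}-\rho)]|\le \mathbb{E}[g_j|\hat{\rho}-\rho|]\le \mathbb{E}[g_j B]=|\mathbb{E}[g_j B]|$, and summing over $j$ gives the claim.

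On the pointwise inequality $|\hat{\rho}-\rho|\le B(\hat{\rho},\rho)$: you are right that this is the crux, and you are more careful than the paper, which simply asserts the final bound without comment. Note, though, that the inequality fails for a generic Bregman generator (e.g.\ $B(x,y)=(x-y)^2$ when $|x-y|<1$), so any rigorous version needs a curvature condition on the generator or should be read up to a loss-dependent constant. The Taylor-expansion and link-function machinery you outline goes well beyond what the paper's proof invokes; it treats this step as a primitive.
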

The proof for this proposition is in the supplement.
When $\hat{\rho} = \rho$ this discrepancy is trivially equal to 0.
The degree to which balance is attained is implied by the quality of the approximation of $\hat{\rho}$ to $\rho$.
The divergence used in this bound is determined by the classifier used in the estimation of the weights.
For example, when the \bope classifier is trained to minimize the log-loss, $B$ is the Jensen-Shannon divergence.
Bregman divergences define a wide variety of divergences including KL divergence and maximum mean discrepancy~\citep{huszar2013scoring} that are often considered in the analysis of off-policy evaluation and covariate shift~\citep{kallus2018balanced, bickel2009discriminative, gretton2009covariate}. 
Proposition 3 of \citet{menon2016linking} shows that minimizing the scoring rule in the classifier is equivalent to minimization of the divergence.
This demonstrates that minimization of the \bope classifier loss is tied to minimization of imbalance.

\section{Theoretical Properties}
\label{sec:theory}

\bope lets us formally tie classifier performance to the quality of our off-policy evaluation.
In this section, we make this explicit by:
\begin{enumerate}[itemsep=0.2pt,topsep=1pt, leftmargin=*]
    \item Describing bounds for the bias and variance of the off-policy estimate in terms of the error for the density ratio.
    \item Using results from prior work to show that minimizing the risk of the binary classifier used in \bope is equivalent to minimizing the error of the density ratio.
    \item Combining these results to show consistency of \bope~(described in Section \ref{sec:methods}) for off-policy evaluation.
\end{enumerate}
An immediate consequence of these properties is that hyper-parameter tuning and model selection to minimize the risk of the binary classifier used in \bope \textit{directly translates} to minimizing the error in off-policy evaluation via imbalance minimization.
Importantly, this property is \textit{not} shared by weights based on propensity score estimation~\citep{kang2007demystifying}. 

Let $p(a, s) := \frac{\pi_1(a,s)}{\pi_1(a,s) + \pi_0(a,s)}$ denote the true class probability of observing data $(a,s)$ under the target policy $\pi_1$ instead of the observed policy $\pi_0$. 
This is estimated with a probabilistic classifier $\hat{p}(a,s)$ on labelled state-action data. 
Additionally, let $$\rho(a,s) := \frac{\pi_1(a,s)}{\pi_0(a,s)} = \frac{p(a,s)}{1 - p(a,s)}$$ denote the true policy density ratio, with estimator $\hat{\rho}$. 
We assume the classifier has regret that decays with increasing $n$.

\begin{assumption}
\label{assump:class}
Let $\hat{p}(a,s)$ be a probabilistic classifier such that $\textrm{regret}(\hat{p}; \mathcal{D}, \ell) = O(n^{-\epsilon})$ for some constant $\epsilon \in (0,1)$.
\end{assumption}

Next, we require that our importance sampling weight estimator, $\hat{\rho}$, is independent of the observed rewards $r$. This can be easily achieved through sample splitting, training the classifier $\hat{p}$ and applying \bope on independent datasets.

\begin{assumption}
\label{assump:indep}
Given observed state-action data, the density ratio estimator $\hat{\rho}$ is independent of the observed rewards $r(\pi_0(s), s)$.
\end{assumption}

Finally, we require certain regularity conditions and rates to use in our theoretical results. 
\begin{assumption}
\label{assump:regularity}
(i) The functions $\pi_0(a,s), \pi_1(a,s), \rho(a,s)$, and  $\hat{\rho}(a,s)$ have bounded second derivatives with respect to $a$, and \\
(ii) In the continuous action domain, the bandwidth parameter $h = O(n^{-1/5})$.
\end{assumption}

We now show that the importance sampling estimator using \bope in equation \eqref{eq:bope-wis} is asymptotically unbiased, and derive a bound for its variance. 
We accomplish this by characterizing the asymptotic quantities in terms of the Bregman divergence between the estimated and true density ratios. 
In the propositions below, we use $r_{\pi_1}$ to denote $r(\pi_1(s), s)$ and $\rho_{\pi_1}$ to denote $\rho(\pi_1(s), s)$.

\begin{proposition}
\label{prop:bregmanbias}
In discrete action spaces, the expected bias of $\hat{V}^{B-OPE}$ obeys the following bound: 
$$
    \left|\mathbb{E}_{\pi_1}[r] -
    \mathbb{E}_{\pi_0}\left[ \mathds{1}_a(\pi_1(s)) \hat{\rho}(a, s) r(a, s) \right]\right|
    \leq \mathbb{E}_{\pi_0}[B\left(\rho, \hat{\rho}\right) r_{\pi_1}]
$$
In continuous action spaces, the expected bias of $\hat{V}^{B-OPE}$ obeys the following bound
\begin{align*}
    &\left|\mathbb{E}_{\pi_1}[r] -
    \mathbb{E}_{\pi_0}\left[\cfrac{1}{h} K\left(\cfrac{a - \pi_1(s)}{h} \right) \hat{\rho}(a, s) r(a, s) \right]\right|\\
    & \leq \mathbb{E}_{\pi_0}[B\left(\rho, \hat{\rho}\right) r_{\pi_1}] + o(h^2)
\end{align*}
\end{proposition}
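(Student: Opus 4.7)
The plan is to split the bias into a weight-estimation piece (which yields the Bregman divergence term) and, in the continuous case, a kernel-smoothing piece (which yields the $o(h^2)$ residual).

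For the discrete case, I would first invoke Assumption~\ref{assumption:cia} together with the definition $\rho = \pi_1/\pi_0$ to write $\mathbb{E}_{\pi_1}[r(a,s)] = \mathbb{E}_{\pi_0}[\mathds{1}_a(\pi_1(s))\rho(a,s) r(a,s)]$, so that subtracting the B-OPE integrand leaves the bias as $\mathbb{E}_{\pi_0}[\mathds{1}_a(\pi_1(s))(\rho - \hat{\rho}) r(a,s)]$. Because the indicator is supported only at $a = \pi_1(s)$, on its support $r(a,s) = r_{\pi_1}$. I would then bring the absolute value inside via Jensen's inequality, drop the indicator using $\mathds{1}_a(\pi_1(s)) \leq 1$, and invoke the pointwise relation $|\rho - \hat{\rho}| \leq B(\rho, \hat{\rho})$ that follows for the Bregman divergence induced by the strictly proper composite loss in Assumption~\ref{assump:strictlyproper} (the same mechanism underlying Proposition~\ref{prop:balance}).

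For the continuous case, I would introduce an ``oracle'' intermediate using the true ratio $\rho$ and split the bias into a kernel-smoothing term $|\mathbb{E}_{\pi_1}[r] - \mathbb{E}_{\pi_0}[\tfrac{1}{h}K(\tfrac{a - \pi_1(s)}{h})\rho\, r]|$ and a weight-estimation term $|\mathbb{E}_{\pi_0}[\tfrac{1}{h}K(\tfrac{a - \pi_1(s)}{h})(\rho - \hat{\rho}) r]|$. The second piece is handled exactly as in the discrete case, since the kernel integrates to one in $a$ and therefore reproduces the discrete argument after a change of variable. For the first piece, I would substitute $u = (a - \pi_1(s))/h$ and perform a second-order Taylor expansion of the smooth integrand $\pi_0(a,s)\rho(a,s) r(a,s)$ in $a$ around $a = \pi_1(s)$, invoking the bounded-second-derivative condition of Assumption~\ref{assump:regularity}(i). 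Symmetry of $K$ then kills the first-order contribution, and the bandwidth rate $h = O(n^{-1/5})$ from Assumption~\ref{assump:regularity}(ii) delivers the claimed $o(h^2)$ remainder.

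The technically delicate point is justifying $|\rho - \hat{\rho}| \leq B(\rho, \hat{\rho})$ pointwise for the Bregman divergences admitted by Assumption~\ref{assump:strictlyproper}, since this is what ties the generic loss choice to the integrated bias bound and leans on the loss--divergence correspondence of \citet{menon2016linking}. A secondary but routine challenge is ensuring that the Taylor expansion in the continuous case is uniformly controlled across $s$; this is exactly what the bounded-derivative regularity in Assumption~\ref{assump:regularity}(i) is stated to provide, and combined with Assumption~\ref{assumption:boundedreward} and Assumption~\ref{assumption:boundedratio} guarantees that the remainder is integrable against $\pi_0$.
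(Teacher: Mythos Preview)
Your proposal follows essentially the same route as the paper: decompose $\hat{\rho}=\rho+\delta$, show the $\rho$-piece recovers $\mathbb{E}_{\pi_1}[r]$ (exactly in the discrete case, up to $o(h^2)$ via the $u$-substitution and second-order Taylor expansion in the continuous case), and bound the $\delta$-piece pointwise by $B(\rho,\hat{\rho})$ before integrating against $\pi_0$ on the slice $a=\pi_1(s)$. You also correctly flag that the step $|\rho-\hat{\rho}|\le B(\rho,\hat{\rho})$ is the load-bearing inequality, which the paper invokes in the same way here and in Proposition~\ref{prop:balance}.

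One small correction: in the continuous case the weight-estimation piece is \emph{not} handled ``exactly as in the discrete case'' merely because $\int K(u)\,du=1$. After the substitution $a=\pi_1(s)+hu$, the integrand $B(\rho,\hat{\rho})\,r\,\pi_0$ is still evaluated at $\pi_1(s)+hu$, so the paper applies the same second-order Taylor expansion to this term as well, picking up its own $o(h^2)$ remainder that is then absorbed into the stated bound. Your argument goes through once you add that step.
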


\begin{proposition}
\label{prop:bregmanvar}
In discrete action spaces, the variance of $\hat{V}^{B-OPE}$ obeys the following bound
\begin{align*}
    &\textrm{Var}_{\pi_0}\left[ \hat{V}^{\bope} \right]\leq\\ &\cfrac{1}{n} \bigg( \mathbb{E}_{\pi_1} [ \rho_{\pi_1} r_{\pi_1}^2] + \mathbb{E}_{\pi_0} [r_{\pi_1}^2( \delta^2 + 2 \delta \rho_{\pi_1})] \bigg)
\end{align*}
In continuous action spaces, the variance of $\hat{V}^{B-OPE}$ obeys the following bound
\begin{align*}
&\textrm{Var}_{\pi_0}\left[ \hat{V}^{\bope} \right]
\leq \\
&\cfrac{ R(K)}{nh} \bigg( \mathbb{E}_{\pi_1} [ \rho_{\pi_1} r_{\pi_1}^2] + \mathbb{E}_{\pi_0} [r_{\pi_1}^2 (\delta^2 + 2 \delta \rho_{\pi_1})] \bigg) + 
o\left( \cfrac{1}{nh} \right)
\end{align*}
where $R(K) = \int K(u)^2 du$ and $\delta = B(\rho, \hat{\rho})$.
\end{proposition}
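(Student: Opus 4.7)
The plan is to bound $\mathrm{Var}_{\pi_0}[\hat V^{\bope}]$ by $\tfrac{1}{n}$ times the second moment of a single summand (exploiting the i.i.d.\ structure and the classifier/reward independence granted by Assumption~\ref{assump:indep}), then algebraically split $\hat\rho^2 = \rho^2 + 2\rho(\hat\rho-\rho) + (\hat\rho-\rho)^2$, route the $\rho^2$ piece back to an expectation under $\pi_1$ via importance reweighting, and finally control the two residual pieces by the Bregman divergence $\delta = B(\rho,\hat\rho)$.

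In the discrete case I would start from $\mathrm{Var}_{\pi_0}[\hat V^{\bope}] \le \tfrac{1}{n}\mathbb{E}_{\pi_0}[\mathds{1}_a(\pi_1(s))\,\hat\rho(a,s)^2\,r_{\pi_1}^2]$, using idempotency of the indicator and the fact that $r=r_{\pi_1}$ on the event $\{a=\pi_1(s)\}$. Substituting the expansion of $\hat\rho^2$, the $\rho^2$ contribution collapses to $\mathbb{E}_{\pi_1}[\rho_{\pi_1} r_{\pi_1}^2]$, because $\mathds{1}_a(\pi_1(s))\,\rho(a,s)\,d\pi_0(a\mid s) = d\pi_1(a\mid s)$ under Assumptions~\ref{assumption:support}--\ref{assumption:boundedratio}; this reproduces the first term of the stated bound. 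The cross and square residuals contribute $\mathbb{E}_{\pi_0}[\mathds{1}_a(\pi_1(s))(2\rho_{\pi_1}(\hat\rho-\rho) + (\hat\rho-\rho)^2)\,r_{\pi_1}^2]$; bounding $|\hat\rho-\rho|$ pointwise by $\delta$ (see the obstacle below) and dropping the indicator by non-negativity then yields the $\mathbb{E}_{\pi_0}[r_{\pi_1}^2(\delta^2+2\delta\rho_{\pi_1})]$ term.

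The continuous case follows the same three-step template with $\mathds{1}_a(\pi_1(s))$ replaced by $\tfrac{1}{h}K\!\left(\tfrac{a-\pi_1(s)}{h}\right)$. Squaring the kernel introduces a factor $\tfrac{1}{h^2}K(\cdot)^2$; after the substitution $u=(a-\pi_1(s))/h$ and a second-order Taylor expansion of the integrand (a product of $\pi_0$, $\hat\rho^2$, and $r^2$) around $\pi_1(s)$---justified by the bounded-second-derivative hypothesis of Assumption~\ref{assump:regularity}(i)---one obtains $\tfrac{R(K)}{h}$ times exactly the discrete-case integrand evaluated at $a=\pi_1(s)$, plus an $O(h)$ additive bias in the integrand. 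After multiplying by $\tfrac{1}{n}$ and using the bandwidth rate $h=O(n^{-1/5})$ from Assumption~\ref{assump:regularity}(ii), that bias is absorbed into the stated $o(1/(nh))$ remainder.

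The main obstacle is passing from the Bregman divergence $\delta = B(\rho,\hat\rho)$ to a pointwise bound on $|\hat\rho-\rho|$, since a generic strictly proper composite loss need not generate a literally quadratic divergence. The natural resolution is the twice-differentiability of the Bayes risk $f$ from Assumption~\ref{assump:strictlyproper}: Taylor's theorem gives $B(\rho,\hat\rho) = \tfrac{1}{2}f''(\xi)(\rho-\hat\rho)^2$ for some $\xi$ between $\rho$ and $\hat\rho$, and under the bounded-ratio condition of Assumption~\ref{assumption:boundedratio} $f''$ is bounded away from $0$ and $\infty$ on the relevant range, so $(\hat\rho-\rho)^2$ and $\delta$ are equivalent up to multiplicative constants that can be absorbed into the stated bound. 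Everything else in the continuous calculation is standard variance analysis of a kernel-smoothed importance-weighted sample mean.
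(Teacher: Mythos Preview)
Your proposal follows essentially the same route as the paper: bound the variance by $1/n$ times the second moment of a single summand, expand $\hat\rho^2=(\rho+(\hat\rho-\rho))^2$, convert the $\rho^2$ piece to a $\pi_1$-expectation by importance reweighting, and in the continuous case use the substitution $u=(a-\pi_1(s))/h$ with a second-order Taylor expansion to extract the $R(K)/h$ factor and an $o(h^{-1})$ remainder. The paper's appendix proof does exactly this, line for line.

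Where you are more careful than the paper is the ``obstacle'' you flag. The paper simply writes ``bounding $\delta$ by the Bregman divergence $B$'' (in the proof $\delta$ denotes $\hat\rho-\rho$, whereas in the proposition statement $\delta:=B(\rho,\hat\rho)$) and moves on, i.e.\ it asserts the pointwise inequality $|\hat\rho-\rho|\le B(\rho,\hat\rho)$ without further comment. Your Taylor argument, however, gives $(\hat\rho-\rho)^2\asymp B(\rho,\hat\rho)$ up to constants depending on $f''$, hence $|\hat\rho-\rho|\asymp\sqrt{B}$; plugging that back yields a bound of the shape $C_1\delta+C_2\sqrt{\delta}\,\rho_{\pi_1}$ rather than the stated $\delta^2+2\delta\rho_{\pi_1}$, so ``absorbed as multiplicative constants'' is not literally right. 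For the downstream use in Proposition~\ref{prop:consistency} (the variance vanishing as classifier regret $\to 0$) either form suffices, and yours is arguably the more defensible one; but to reproduce the proposition exactly as stated you would have to treat $|\hat\rho-\rho|\le B(\rho,\hat\rho)$ the way the paper does, as a bare inequality.
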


The proofs are deferred to the supplement. 
The implication of Proposition~\ref{prop:bregmanbias} is that the expected bias of \bope is bounded from above by the Bregman divergence between the true density ratio between the observed and proposed policy and the model estimate of the density ratio. 
The specific Bregman divergence depends on the choice of classifier $\hat{p}$. 
We can then appeal to Proposition 3 of \citet{menon2016linking} to provide an explicit link between the risk of the classifier and the Bregman divergence between $\rho(a, s)$ and $\hat{\rho}(a, s)$.

We now prove the consistency of the \bope estimator given in \eqref{eq:bope-wis}: 

\begin{proposition}
\label{prop:consistency}
Under Assumptions \ref{assumption:support}-\ref{assump:regularity}, and with bounded variance of the Bregman divergence, the \bope estimator is consistent for the counterfactual policy value, that is, as $n \longrightarrow \infty$, $\hat{V}^{B-OPE}  \longrightarrow \mathbb{E}_{\pi_1}[r]$.
\end{proposition}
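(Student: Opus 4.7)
The plan is to view $\hat V^{\bope} = N_n/D_n$ as a ratio of empirical averages, with $N_n = \frac{1}{n}\sum_i J(a_i,a_i')\hat\rho(a_i,s_i)r_i$ the numerator of \eqref{eq:bope-wis} and $D_n$ the same expression with $r_i$ replaced by $1$. Consistency will follow if I can show $N_n \to \mathbb{E}_{\pi_1}[r]$ and $D_n \to 1$ in probability, and then invoke Slutsky's theorem, using that the limiting denominator $1$ is strictly positive.

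The first ingredient is $\mathbb{E}[B(\rho,\hat\rho)] \to 0$. Assumption \ref{assump:strictlyproper} makes the classifier loss a strictly proper composite scoring rule, so by Proposition 3 of \citet{menon2016linking}, cited after Proposition \ref{prop:balance}, the excess classifier risk equals the expected Bregman divergence between $\hat\rho$ and $\rho$. Assumption \ref{assump:class} gives regret of order $O(n^{-\epsilon})$, yielding $\mathbb{E}[B(\rho,\hat\rho)] = O(n^{-\epsilon}) \to 0$; combined with the bounded-variance hypothesis on $B(\rho,\hat\rho)$ stated in the proposition, this upgrades to $B(\rho,\hat\rho) \to 0$ in probability.

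Next I apply Proposition \ref{prop:bregmanbias} twice, once with $r$ and once with $r \equiv 1$ (a trivial case of Assumption \ref{assumption:boundedreward}), using Assumption \ref{assump:indep} to treat $\hat\rho$ as independent of the rewards. This gives $|\mathbb{E}[N_n] - \mathbb{E}_{\pi_1}[r]| \le \alpha_2\,\mathbb{E}[B(\rho,\hat\rho)] + o(h^2)$ and $|\mathbb{E}[D_n] - 1| \le \mathbb{E}[B(\rho,\hat\rho)] + o(h^2)$; both right-hand sides vanish because $h = O(n^{-1/5})$ by Assumption \ref{assump:regularity}. Proposition \ref{prop:bregmanvar} bounds $\mathrm{Var}(N_n)$ by a leading term of order $\frac{1}{nh}\mathbb{E}_{\pi_1}[\rho_{\pi_1}r_{\pi_1}^2]$, finite by Assumptions \ref{assumption:boundedratio} and \ref{assumption:boundedreward}, plus a $\delta$-dependent piece with $\delta = B(\rho,\hat\rho)$ that is asymptotically negligible. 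Since $1/(nh) = n^{-4/5} \to 0$, we get $\mathrm{Var}(N_n) \to 0$, and the identical argument controls $\mathrm{Var}(D_n)$. By Chebyshev's inequality, $N_n \to \mathbb{E}_{\pi_1}[r]$ and $D_n \to 1$ in probability, and Slutsky delivers the claim.

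The hardest part will be the clean propagation of the regret rate from Assumption \ref{assump:class} through the nonlinear Bregman divergence into the bias and variance bounds, which in Propositions \ref{prop:bregmanbias} and \ref{prop:bregmanvar} are stated conditional on $\hat\rho$. The ``bounded variance of the Bregman divergence'' hypothesis is precisely the lever that licenses passing the limit inside the outer expectation over $\hat\rho$, and controlling the cross term $\delta^2 + 2\delta\rho_{\pi_1}$ in the variance bound requires invoking $\rho \le \alpha_1$ (Assumption \ref{assumption:boundedratio}) together with $r \le \alpha_2$ (Assumption \ref{assumption:boundedreward}) to secure uniform integrability when taking expectations over the random $\hat\rho$.
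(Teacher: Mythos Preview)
Your proposal is correct and follows essentially the same approach as the paper: both arguments combine the Menon--Ong link between classifier regret and $\mathbb{E}[B(\rho,\hat\rho)]$ with the bias and variance bounds of Propositions~\ref{prop:bregmanbias} and~\ref{prop:bregmanvar}, then use the bounded-variance hypothesis on $B(\rho,\hat\rho)$ together with $h=O(n^{-1/5})$ to drive both bias and variance to zero. The only cosmetic difference is that you make the ratio structure $N_n/D_n$ and the Slutsky step explicit, whereas the paper absorbs the denominator argument into the proof of Proposition~\ref{prop:bregmanbias} via the continuous mapping theorem.
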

\begin{proof}
Based on Propositions~\ref{prop:bregmanbias} and ~\ref{prop:bregmanvar}, by selecting a Bregman divergence of the form in Proposition 3 of \citet{menon2016linking}, we can bound the bias and variance in terms of the classifier $\hat{\rho}$ regret. Recall from Assumption~\ref{assump:class}, this regret scales as $O(n^{-\epsilon})$ for $\epsilon \in (0,1)$. Then, since rewards $r$ are bounded, and $h = O(n^{-1/5})$ we have that the bias tends to 0 as $n \rightarrow \infty$. 

We can apply a similar argument for the variance, by decomposing $$\mathbb{E}_{\pi_0} [B(\rho, \hat{\rho})^2] = Var_{\pi_0} [B(\rho, \hat{\rho})] + \mathbb{E}_{\pi_0} [B(\rho, \hat{\rho})]^2.$$ Then, given that $Var_{\pi_0} [B(\rho, \hat{\rho})]$, $\rho$, and $r$ are bounded, we have that the variance bound in Proposition~\ref{prop:bregmanvar} also goes to 0 as $n \rightarrow \infty$. 
\end{proof}

The full proof and technical details for these results can be found in the supplement.
It is worth briefly discussing the implications of Propositions \ref{prop:balance}-\ref{prop:bregmanvar} combined with Proposition 3 of \citet{menon2016linking} which ties classifier risk to the quality of the density ratio estimate. 
Proposition \ref{prop:balance} implies that optimizing classifier performance directly translates into optimizing the quality of the importance sampler.
In short, \bope allows for principled tradeoffs between imbalance (and the bias that comes with it) against variance in finite samples.
The bias and variance of the estimated policy evaluation can be minimized by optimizing for classifier performance. 
Because the classifier risk is directly tied to the quality of the off-policy estimate, the problem is essentially reduced to model selection for supervised learning.
As sample sizes increase, however, \bope maintains consistency and reduces imbalance.
Even under misspecification, \bope seeks to minimize imbalance.
In this case, bias will not vanish asymptotically, although imbalance will.

\section{Related Work}\label{sec:related_work}

Related work can roughly be divided into three categories: off-policy evaluation of contextual bandits, balancing estimators, and density ratio estimation. 
The most closely related work is prior work on off-policy evaluation for contextual bandits.
\citet{li2011unbiased} introduced the use of rejection sampling for offline evaluation of contextual bandit problems. 
Within the causal inference community there is a long literature on the use of doubly robust estimators~ \citep[c.f.][]{bang2005doubly, kang2007demystifying, tan2010bounded, cao2009improving}.
\citet{dudik2011doubly} later proposed the use of doubly robust estimation for off-policy evaluation of contextual bandits, combining the doubly robust estimator of causal effects with a rejection sampler.
Since then, several works have sought to minimize the variance and improve robustness of the doubly robust estimator. 
\citet{farajtabar2018more} and \citet{wang2017optimal} present work to minimize the variance of the estimators by reducing the dependence on the inverse propensity score in high variance settings. 
\citet{swaminathan2015self} use a H\'{a}jek style estimator~\citep{hajek1964asymptotic}.
Later work from \citet{thomas2015safe} and \citet{swaminathan2015self} build on this work to improve estimation. 

A second related line of work is balancing estimators. 
Under correct specification of the conditional model \citet{rosenbaum1983central} show balance of the propensity score. 
More recently, a growing literature seeks to develop balancing estimators which are robust to mis-specification.
\citet{hainmueller2012entropy} and \citet{zubizarreta2015stable} provide optimization-based procedures which define weights that are balancing but are not necessarily valid propensity scores. 
\citet{imai2014covariate} later defined an estimator which strives to find a valid propensity score subject to balancing constraints. 
This was extended to general treatment regimes by \citet{fong2018covariate}.

However, none of these directly address the problem of off-policy evaluation for contextual bandits. 
\citet{kallus2018balanced} introduces a method for balanced policy evaluation that relies on a regularized estimator that seeks to minimize the maximum mean discrepancy~\citep{gretton2012kernel}.
Calculation of weights is achieved through a quadratic program, which presents computational challenges as sample size grows large. 
It is interesting to note that the proposed evaluation procedure of \citet{kallus2018balanced} fits within the assumptions of \bope where the scoring rule is maximum mean discrepancy~(a strictly proper scoring rule) and the model is learned with variance regularization. 
The accompanying classifier can be defined via a modification of support vector machine classification~\citep{bickel2009discriminative}.
\citet{dimakopoulou2018balanced} propose balancing in the context of online learning linear contextual bandits by reweighting based on the propensity score. 
This differs from this work in the focus on online learning rather than policy evaluation and the use of a linear model-based propensity score which provides mean balance only in the case of correct specification.
\citet{wu2018variance} propose a method which seeks to minimize an $f$-divergence to minimize regret, similar to the target in this work. 
However in the setting of \citet{wu2018variance} access to the true propensities are assumed, whereas \bope estimates the density ratio directly from observed and proposed state action pairs. 

The final line of related work is density ratio estimation.
The use of classification for density ratio estimation dates back to at least \citet{qin1998inferences}.
Later work leverages classification for covariate shift adaptation~\citep{bickel2007discriminative, bickel2009discriminative} and two-sample testing~\citep{friedman2004multivariate, lopez2017revisiting}.
However, this work is the first time classifier-based density estimation has been adapted for off-policy evaluation. 
There is also a growing literature on density ratio estimation that does not rely on classification models. 
These methods largely rely on kernels to perform estimation~\citep{huang2007correcting, sugiyama2012density}.
KL importance estimation~(KLIEP)~\citep{sugiyama2008direct}, and least squares importance fitting~(LSIF)~~\citep{kanamori2009least} are the most directly relevant, given their ability to optimize hyper-parameters via cross validation.
Interestingly, \citet{menon2016linking} provides a loss for classification based density ratio estimation that reproduces KLIEP and LSIF.
Thus, these estimators can be included inside of \bope by considering the corresponding loss functions for the classifier. 

\section{Experiments}\label{sec:exps}

In the experiments that follow, we evaluate direct method, importance sampling, and SWITCH estimators for off-policy evaluation and show that estimators which use \bope typically outperform those which use standard IPS.
For the latter two methods, we compare inverse propensity score and \bope weights, and use the self-normalized versions of the estimators given in Section \ref{sec:background}.
In the SWITCH estimator, the threshold parameter $\tau$ is selected using the tuning method suggested by \citet{wang2017optimal}. 
We defer our results for doubly robust estimators to the supplement, but found the same trends in those evaluations. 
The direct method, propensity score, and \bope estimators are all trained as gradient boosted tree classifiers (or regressors for the continuous evaluations). 

\subsection{Discrete Action Spaces}

\begin{figure*}[ht]
    \centering
    \includegraphics[width=\textwidth,trim={0 1.5cm 0 0},clip]{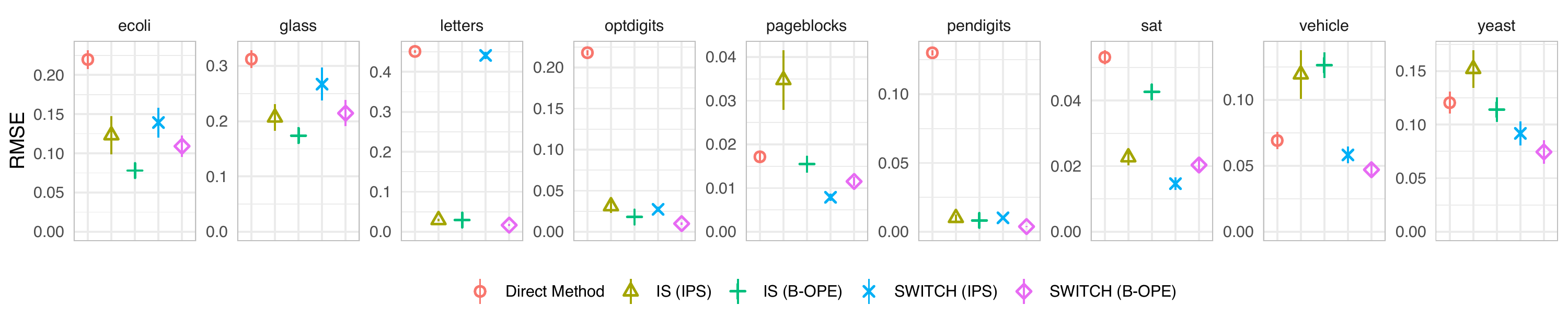}
    \includegraphics[width=\textwidth,trim={0 .5cm 0 0},clip]{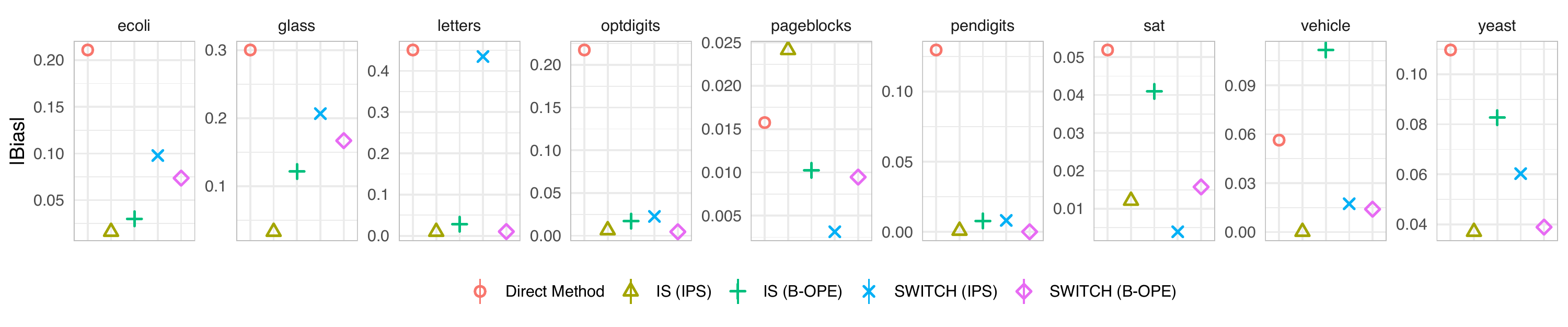}
    \caption{
    Root mean-squared error~(RMSE) and bias for discrete action spaces using the classifier trick of \citet{dudik2014doubly}.
    Points labeled ``IS'' use only weighting plus rejection sampling with either IPS or \bope.
    Points labeled ``SWITCH'' use the method of \citet{wang2017optimal} to adaptively combine importance sampling and direct estimators.
    RMSE is lower when using \bope compared to IPS; bias is typically slightly higher from \bope.
    }
    \label{fig:discresults}
\end{figure*}

We begin by evaluating the accuracy of \bope for the value of an unobserved policy in the discrete reward setting. 
We employ the method of \citet{dudik2011doubly} to turn a k-class classification problem into a k-armed contextual bandit problem. 
We split the data, training a classifier on one half of the data (\verb+train+). 
This classifier defines our target policy, wherein the action taken is the label predicted. 
The reward is defined as an indicator of whether the predicted label is the true label.
The optimal policy, then, is to take an action equal to the true label in the original data.
Evaluating a policy corresponds to estimating the actor's accuracy at identifying the true label.

In the second half of the dataset (\verb+test+) we retain only a `partially labeled' dataset where we uniformly sample actions (labels) and observe the resulting rewards. 
The \verb+train+ half of the data is also used to train direct method, propensity score, and \bope models.
OPE estimators based on these models are then applied to the \verb+test+ data to estimate the relevant quantities for off-policy evaluation methods. 
We compare the expected reward estimates to the true mean reward of the target policy applied to the \verb+test+ data. 
For each dataset, this process is repeated over 100 iterations, where we vary the actions under the observed uniform policy. 

Our target policy model is trained as a multi-class random forest classifier.
These models use the default hyperparameter values from scikit-learn with the exception of the number of trees.
In order to provide increasingly complex policies to evaluate, we increase the number of trees as a function of sample size: $\left\lfloor 10 \times n ^ {\frac{1}{4}}\right\rfloor$.
The propensity score, \bope and direct method (one-vs-rest) models are gradient boosted decision trees with default XGBoost hyperparameters with the exception of the number of boosting iterations.
In order to adapt the estimator to the size of the dataset, the number of iterations is set as a function of sample size: $\left\lceil 20 \times \sqrt{n}\right\rceil$.
We use the same datasets from the UCI repository \citep{dua2017uci} used by \citet{dudik2011doubly}, and summarize their characteristics in the supplement. 
For some datasets, we removed classes with low frequencies to avoid issues when data splitting. 

The results of the OPE estimators are summarized in Figure~\ref{fig:discresults}, where we plot the root mean squared error and bias averaged over 100 iterations. 
We see that the direct method estimator tends to be heavily biased for the true policy value, compared to \bope and IPS. 
The direct method generally performs quite poorly in terms of overall accuracy.
The standard \bope estimator performs at least as well as and typically better than the IPS estimator. 
This also holds for the corresponding SWITCH estimators.
While \bope often has slightly higher bias than IPS, it strikes a better balance between bias and variance, leading to substantially improved accuracy in most cases. 

\subsection{Continuous Action Spaces}
\label{subsection:continuous}

\begin{figure*}[ht]
    \centering
    \includegraphics[width=0.85\textwidth,trim={0 1.5cm 0 0},clip]{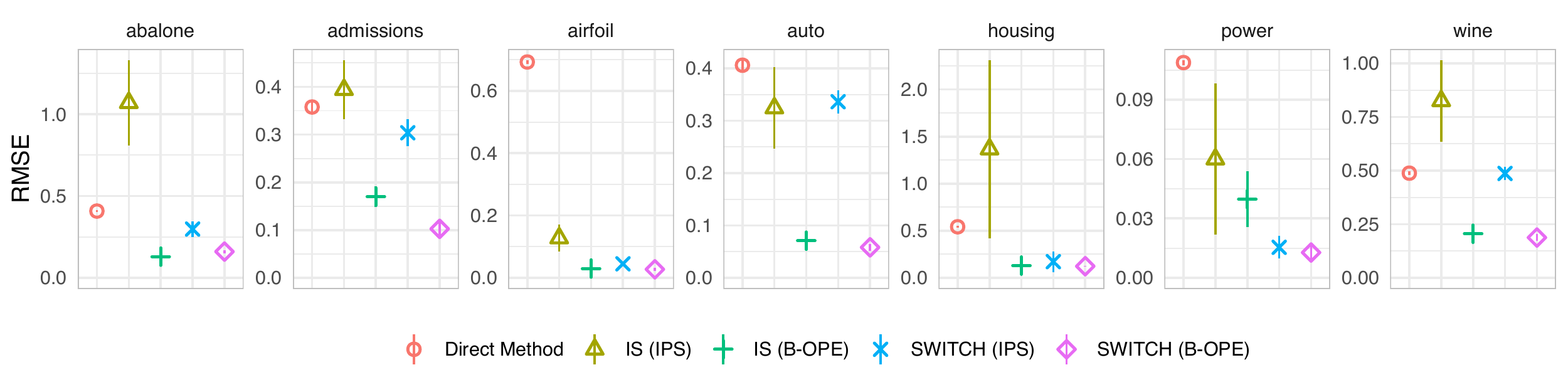}
    \includegraphics[width=0.85\textwidth,trim={0 .5cm 0 0},clip]{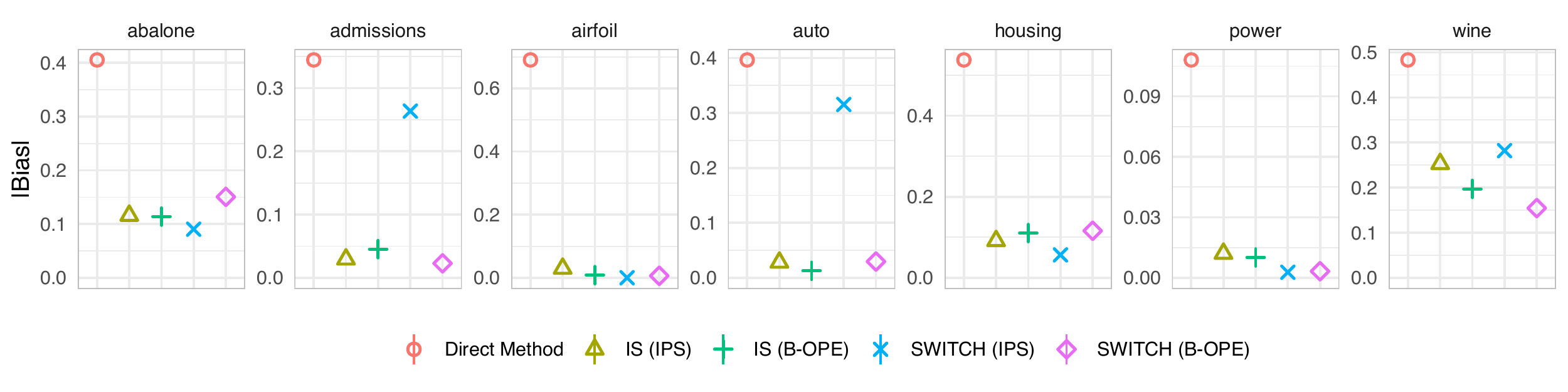}
    \caption{
    RMSE and bias for continous action spaces using a modification of the classifier trick of \citet{dudik2014doubly} for regression detailed in Section \ref{subsection:continuous}. 
    Points labeled ``IS'' use only weighting plus rejection sampling with either IPS or \bope.
    Points labeled ``SWITCH'' use the method of \citet{wang2017optimal} to adaptively combine importance sampling and direct estimators.
    RMSE and bias are typically lower using \bope than IPS.
    }
    \label{fig:ctsresults}
\end{figure*}

For the continuous action case, we provide a novel extension of the same transformation employed in the previous section for evaluation of discrete actions. 
We take a selection of datasets with continuous outcomes, and train a regression model on the \verb+train+ half of the data, which constitutes our target policy.
The reward of a prediction (defined to be an action in our evaluation) is the negative of the Euclidean distance to the true label. 
Thus, it is optimal to choose actions equal to the true outcome as in the discrete evaluation.
Evaluating the behavior policy is equivalent to estimating the mean squared error of the predictive model.

As before, we retain the \verb+test+ data for evaluation, while using the \verb+train+ data to train direct method, propensity score, and \bope models.
For our observed policy, we sample actions from the empirical distribution of \verb+train+ labels, and compute the corresponding rewards.
We then estimate the target policy value, repeating this over 300 iterations.
We retain the same basic models from the previous section for this evaluation, swapping out classifiers for regressors as appropriate.

We use datasets from the UCI repository~\citep{dua2017uci} and Kaggle, and summarize their characteristics in the supplement.
The policy we evaluate is given by training a random forest regression to predict the continuous outcome.
We also use gradient boosted regression trees for training direct method, propensity score, and \bope models. 
Specifically, to obtain a continuous propensity score, we apply our observed policy to the \verb+train+ data, and train a model $\hat{g}$ to predict actions from state features. 
Then, conditional on state $s$, the action is assumed to come from a normal distribution with mean $\hat{g}(s)$ and variance $MSE(\hat{g})$ as is standard practice~\citep{hirano2004propensity}. 
For each state-action pair $(s,a)$ in the \verb+test+ data, the generalized propensity score is then the density of this distribution at $a$.

As in the previous section, we compare \bope to IPS (with the \citet{kallus2018policy} kernel) and the direct method, including the corresponding SWITCH estimators. 
These results are displayed in Figure~\ref{fig:ctsresults}. 
We see that \bope outperforms the other methods uniformly across all datasets.
In contrast to the binary setting, \bope does a better job of correcting for bias than IPS.
This difference can be accounted for by considering that \bope estimates the densities implicitly via binary classification, while IPS must necessarily model the conditional density  of action given state.
The poor performance reflects the difficulty that many practitioners encounter when modeling continuous conditional distributions.
In addition to reducing bias, \bope greatly reduces RMSE in most datasets.
The \bope SWITCH estimator improves on the IPS version in both RMSE and bias in almost all cases.
On the \verb+power+ dataset, \bope incurs half the RMSE of IPS when used within the SWITCH estimator.
On \verb+admissions+ and \verb+auto+, \bope incurs less than one-third of the RMSE than does standard IPS.

\section{Conclusion}\label{sec:concl}

Off-policy evaluation is a critical component for the deployment of contextual bandit solutions in real-world settings. 
The efficacy of a majority of off-policy evaluation methods relies on the quality of their constituent importance weights.
As we have shown, focusing on balance provides an effective means for deriving robust importance weight estimators.
In particular, we introduced \bope, a simple, flexible, and powerful estimator of balancing weights for off-policy evaluation. 
\bope is easily implemented using off the shelf classifiers and trivially generalizes to arbitrary (e.g. continuous, multi-valued) action types. 
In Section \ref{sec:theory} we tie the bias and variance of our estimator with the risk of the classification task, and show that \bope inherently minimizes imbalance.
As a consequence of the theoretical results, hyperparameter tuning and model selection can be performed by minimizing classification error using well-known strategies from supervised learning. 
Experimental evidence indicates that \bope provides strong performance for discrete and continuous actions spaces.
A natural direction for future work is considering the case of evaluation with sequential decision making and structured action spaces. 
\bope could also be extended to perform policy optimization in all of these settings. 
It would also be interesting to integrate methods for variance reduction, e.g. \citet{thomas2016data} and \citet{farajtabar2018more}, to further improve performance.

\clearpage
\bibliographystyle{named}
\bibliography{manuscriptrefs}

\clearpage
\onecolumn
\appendix
\section{Appendix}

\subsection{Proposition 3 of \citet{menon2016linking}}

\begin{proposition}
\label{prop:bregman}
Let $P$ be the class conditional $p(C=1 | s, a)$ and $Q$ be the class conditional $p(C = 0 | s, a)$ with marginal class probability $\frac{1}{2}$. 
Let $\mathcal{D}(P, Q, \frac{1}{2})$ be the joint distribution over $C, S, A$ decomposed into $P$ and $Q$ and the marginal $p(C) = \frac{1}{2}$.
Under assumption A\ref{assump:strictlyproper}, for any scorer $\bar{s} : \mathcal{X} \rightarrow \mathbb{R}$,
    $\textrm{regret}(\bar{s}; \mathcal{D}, \ell) = \frac{1}{2}\mathbb{E}_{X \sim Q}\left[B_{f^\circledast}(\rho, \hat{\rho})\right]$,
where $f^\circledast(z) = (1 + z) f\left(\frac{z}{1 + z}\right)$. 
\end{proposition}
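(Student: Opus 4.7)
The plan is to start from the standard identity that, for a strictly proper composite loss with twice differentiable Bayes risk $f$, the pointwise regret at a feature value $x$ equals a Bregman divergence between the true class probability and its estimate. That is,
\[
\mathrm{regret}(\bar s; \mathcal{D}, \ell) \;=\; \mathbb{E}_{X \sim M}\!\left[B_f(\eta(X), \hat\eta(X))\right],
\]
where $M$ is the marginal of $X$ under $\mathcal{D}$, $\eta(x) = p(C=1\mid x)$, and $\hat\eta$ is the probability implied by the scorer via the inverse link $\Psi^{-1}$. This reduction is exactly the content of the strictly-proper-composite-loss framework of Buja et al.\ and Reid and Williamson cited in Assumption~\ref{assump:strictlyproper}, so I would simply invoke it rather than re-derive it.

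Next I would translate the expectation from the mixture marginal $M$ to $Q$. Because the class prior is $\tfrac12$, $M = \tfrac12(P + Q)$, and using $\rho = \pi_1/\pi_0 = dP/dQ$ gives
\[
\mathbb{E}_{X \sim M}[\,\cdot\,] \;=\; \tfrac12\,\mathbb{E}_{X \sim Q}\!\left[(1 + \rho(X))\,\cdot\,\right].
\]
This reduces the claim to the pointwise identity
\[
(1 + \rho)\,B_f(\eta, \hat\eta) \;=\; B_{f^\circledast}(\rho, \hat\rho),
\qquad\text{with } \eta = \tfrac{\rho}{1+\rho},\;\hat\eta = \tfrac{\hat\rho}{1+\hat\rho}.
\]

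The core computation is this pointwise identity, and it is the step I expect to require the most care. I would verify it by plugging the perspective $f^\circledast(z) = (1+z)\,f\!\left(\tfrac{z}{1+z}\right)$ into the definition of a Bregman divergence:
\[
B_{f^\circledast}(\rho,\hat\rho) = f^\circledast(\rho) - f^\circledast(\hat\rho) - (f^\circledast)'(\hat\rho)(\rho - \hat\rho).
\]
Differentiating gives $(f^\circledast)'(z) = f(g(z)) + \tfrac{1}{1+z} f'(g(z))$ with $g(z) = z/(1+z)$. Substituting and using the algebraic identities $1 + \rho = 1/(1-\eta)$ and $\rho - \hat\rho = (\eta - \hat\eta)/[(1-\eta)(1-\hat\eta)]$, the terms containing $f(\hat\eta)$ combine across $(1+\rho)f(\hat\eta)$ and $-(1+\hat\rho)f(\hat\eta)$, while the derivative term collapses via $(1-\hat\eta)(\rho - \hat\rho) = (\eta - \hat\eta)/(1-\eta)$. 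What remains factors as
\[
\tfrac{1}{1-\eta}\bigl[f(\eta) - f(\hat\eta) - f'(\hat\eta)(\eta - \hat\eta)\bigr] = (1+\rho)\,B_f(\eta,\hat\eta),
\]
which is the desired identity. (This also confirms convexity of $f^\circledast$ when $f$ is convex, since a Bregman divergence is nonnegative.)

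Finally I would assemble the pieces: starting from the regret identity, swapping $M$ for $\tfrac12(P+Q)$, rewriting the $P$-contribution via $\rho\,dQ$, and replacing $(1+\rho) B_f(\eta,\hat\eta)$ by $B_{f^\circledast}(\rho,\hat\rho)$ yields
\[
\mathrm{regret}(\bar s; \mathcal{D}, \ell) = \tfrac12\,\mathbb{E}_{X \sim Q}\!\left[B_{f^\circledast}(\rho,\hat\rho)\right],
\]
as claimed. The only genuinely delicate step is the perspective/change-of-variables identity above; the rest is bookkeeping given the regret reduction supplied by the strictly proper composite loss framework.
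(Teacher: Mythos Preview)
Your argument is correct. The paper itself does not prove this proposition: it simply states ``The proof can be found in \citet{menon2016linking}.'' So there is no in-paper proof to compare against. What you have written is essentially a self-contained version of Menon and Ong's original derivation: reduce regret to an expected Bregman divergence in $\eta$-space via the strictly proper composite loss machinery, change measure from the mixture $M=\tfrac12(P+Q)$ to $Q$ picking up the factor $(1+\rho)$, and then use the perspective identity $(1+\rho)B_f(\eta,\hat\eta)=B_{f^\circledast}(\rho,\hat\rho)$. Your verification of the perspective identity is the right calculation and the place where the work lies; the rest is, as you say, bookkeeping. One small cosmetic point: be consistent about the sign convention on $f$ (Bayes risk is concave, so the Bregman generator is really $-f$), but this does not affect the algebra you carried out.
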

The proof can be found in \citet{menon2016linking}.

\subsection{Proofs of technical results}

Here, we provide technical proofs of our results.

\subsubsection{Proof of Proposition \ref{prop:balance}}
\begin{proof}
\begin{align*}
&\left\|\mathbb{E}_{\pi_0}\left[{\phi(a)\otimes\psi(s)}{\hat{\rho}}\right] - \mathbb{E}_{\pi_1}\left[\phi(a_i)\otimes\psi(s)\right]\right\|_1\\
=&\left\|\mathbb{E}_{\pi_0}\left[{\phi(a)\otimes\psi(s)}{\hat{\rho}(a, s)}\right] - \mathbb{E}_{\pi_0}\left[\phi(a)\otimes\psi(s)\rho\right]\right\|_1\\    
=&\left\|\sum_i^N {\phi(a_i)\otimes\psi(s_i)}{\hat{\rho}(a_i, s_i)} \pi_0(a_i, s_i) - \sum_i^N \phi(s_i)\otimes\psi(s_i)\frac{\pi_1(a_i, s_i)}{\pi_0(a, s)}\pi_0(a_i, s_i)\right\|_1\\
=&\left\|\sum_i^N \phi(a_i)\psi(s_i){\pi_0(a_i, s_i)}{\hat{\rho}(a_i, s_i)}  - \phi(a_i)\otimes\psi(s_i)\pi_1(a_i, s_i)\right\|_1\\
=&\left\|\sum_i^N \phi(a_i)\psi(s_i){p(a_i, s_i)}{\left(\rho(a_i, s_i) + (\hat{\rho}(a_i, s_i) - \rho(a_i, s_i))\right)}  - \phi(a_i)\otimes\psi(s_i)\pi_1(a_i, s_i)\right\|_1\\
=&\left\|\sum_i^N  \phi(a_i)\otimes\psi(s_i)\pi_0(a_i, s_i)(\hat{\rho}(a_i, s_i) - \rho(a_i, s_i)) \right\|_1\\
=&\left\|\mathbb{E}_{\pi_0(a, s)}\left[\phi(a)\otimes\psi(x)(\hat{\rho} - \rho)\right]\right\|_1 
\leq \left\|\mathbb{E}_{\pi_0(a, s)}\left[\phi(a)\otimes\psi(x)B(\hat{\rho}, \rho)\right]\right\|_1 
\end{align*}
\end{proof}

\subsubsection{Proof of Proposition~\ref{prop:bregmanbias}}

Because the weights in the denominator $\hat{V}^{\bope}$ are each consistent for 1, we have that the sum is consistent for $n$. Therefore, by the continuous mapping theorem, we can consider the expectation of a single term in the $\hat{V}^{\bope}$ numerator.

Recall that $\rho(a, s) = \frac{\pi_1(a, s)}{\pi_0(a, s)}$ denotes the true density ratio and $\hat{\rho}(a,s)$ is the estimated density ratio. Further let $\delta(a, s) =  \hat{\rho}(a,s) - \rho(a, s)$. First, we consider the discrete action setting. We can express the expectation as:
\begin{align*}
    \mathbb{E}_{\pi_0}\left[\mathds{1}_a(\pi_1(s)) \hat{\rho}(a, s) r(a, s) \right] 
    &= \mathbb{E}_{\pi_0}\left[\mathds{1}_a(\pi_1(s)) (\rho(a, s) + \delta(a,s)) r(a, s) \right] \\
    &= \mathbb{E}_{\pi_0}\left[\mathds{1}_a(\pi_1(s)) \rho(a, s) r(a, s) \right] + \mathbb{E}_{\pi_0}\left[\mathds{1}_a(\pi_1(s)) \delta(a,s)) r(a, s) \right]
\end{align*}

We can show that the first term is equal to the policy value of $\pi_1$, while the second term provides the estimator's bias. Considering the first term, we have:
\begin{align*}
    \mathbb{E}_{\pi_0}\left[\mathds{1}_a(\pi_1(s)) \rho(a, s) r(a, s) \right]
    &= \sum_{(a,s)} \mathds{1}_a(\pi_1(s)) \rho(a, s) r(a, s) \pi_0(a,s)  \\
    &= \sum_{(a,s)} \mathds{1}_a(\pi_1(s)) r(a, s) \pi_1(a,s) \\
    &= \sum_{s} r(\pi_1(s), s) \pi_1(\pi_1(s), s) \\
    &= \mathbb{E}_{\pi_1}\left[r_{\pi_1} \right],
\end{align*}
where $r_{\pi_1}$ denotes $r(\pi_1(s), s)$.

Now, considering the bias term, and bounding $\delta$ with the Bregman divergence between $\rho$ and $\hat{\rho}$, we have:
\begin{align*}
    \mathbb{E}_{\pi_0}\left[\mathds{1}_a(\pi_1(s)) \delta(a,s)) r(a, s) \right]
    &= \sum_{(a,s)} \mathds{1}_a(\pi_1(s)) \delta(a, s) r(a, s) \pi_0(a,s)  \\
    &\le \sum_{(a,s)} \mathds{1}_a(\pi_1(s)) B(\rho, \hat{\rho}) r(a, s) \pi_0(a,s) \\
    &=  \sum_{s} B(\rho, \hat{\rho}) r(\pi_1(s), s) \pi_0(\pi_1(s),s) \\
    &= \mathbb{E}_{\pi_0}\left[B(\rho, \hat{\rho}) r_{\pi_1} \right]
\end{align*}

We now move on to the continuous action setting. We can express the expectation as:
\begin{align*}
    &\mathbb{E}_{\pi_0}\left[\cfrac{1}{h} K\left(\cfrac{a - \pi_1(s)}{h} \right) \hat{\rho}(a,s) r(a,s) \right]\\
    &=
    \int \cfrac{1}{h} K\left(\cfrac{a - \pi_1(s)}{h} \right)\left(\rho(a, s) + \delta(a, s)\right) r(a,s) \pi_0(a,s) d(a,s)\\
    &=
    \int \cfrac{1}{h} K\left(\cfrac{a - \pi_1(s)}{h} \right) \rho(a, s) r(a,s) \pi_0(a,s) d(a,s) \\
    &\hspace{10pt}+ \int \cfrac{1}{h} K\left(\cfrac{a - \pi_1(s)}{h} \right)\delta(a, s) r(a,s) \pi_0(a,s) d(a,s)
\end{align*}

We can show that the first term is equal to the true counterfactual policy value, while the second term describes the bias induced from estimating the density ratio. Considering the first term, we have:
\begin{align*}
    \int \cfrac{1}{h} K\left(\cfrac{a - \pi_1(s)}{h} \right) \cfrac{\pi_1(a,s)}{\pi_0(a,s)} r(a,s) \pi_0(s,a) d(s,a)
    &= \int \cfrac{1}{h} K\left(\cfrac{a - \pi_1(s)}{h} \right) r(a,s) \pi_1(a,s) d(s,a) 
\end{align*}

Let $u = \cfrac{a - \pi_1(s)}{h}$. Thus, $a = \pi_1(s) + hu$ and $da = h du$. Then, taking a second-order Taylor expansion of $\pi_1$ around $\pi_1(s)$:
\begin{align*}
    &\int \cfrac{1}{h} K\left(\cfrac{a - \pi_1(s)}{h} \right) \cfrac{\pi_1(a,s)}{\pi_0(a,s)} r(a,s) \pi_0(s,a) d(s,a) \\
    &= \int K\left(u \right) r(\pi_1(s) + hu,s) \pi_1(\pi_1(s) + hu,s) d(s,u) \\
    &= \int K\left(u \right) r(\pi_1(s),s) \pi_1(\pi_1(s),s) d(s,u) + \int K\left(u \right) r(\pi_1(s),s) \pi_1'(\pi_1(s),s)(hu) d(s,u) \\
    &\hspace{10pt}+  \int K\left(u \right) r(\pi_1(s),s) \pi_1''(\pi_1(s),s)\frac{(hu)^2}{2} d(s,u) +  \int K\left(u \right) o(h^2) r(\pi_1(s),s) d(s,u) \\
    &= \int K\left(u \right) du \int r(\pi_1(s),s) \pi_1(\pi_1(s),s) ds +  \int u K\left(u \right) du \int r(\pi_1(s),s) \pi_1'(\pi_1(s),s) h d(s,u) \\
    &\hspace{10pt}+  \int u^2 K\left(u \right) du \int \frac{h^2}{2} r(\pi_1(s),s) \pi_1''(\pi_1(s),s) ds +  \int K\left(u \right) du \int o(h^2) r(\pi_1(s),s) ds \\
    &= \int r(\pi_1(s),s) \pi_1(\pi_1(s),s) ds + o(h^2) \\
    &= E_{\pi_1}[r_{\pi_1}] + o(h^2).
\end{align*}
This result follows similarly to those in \citet{kallus2018policy}, by properties of kernels, bounded rewards, and since $\pi_1(a,s)$ has a bounded second derivative with respect to $a$.

Now, considering the bias term, we use the same $u-$substitution and Taylor expansion as before. We also bound $\delta$ by the Bregman divergence between $\rho$ and $\hat{\rho}$, yielding:
\begin{align*} 
    \int \cfrac{1}{h} K\left(\cfrac{a - \pi_1(s)}{h} \right)\delta(a, s) r(a,s) \pi_0(a,s) d(a,s) 
    \leq& \int \cfrac{1}{h} K\left(\cfrac{a - \pi_1(s)}{h} \right) B(\rho, \hat{\rho}) r(a,s) \pi_0(a,s) d(a,s) \\
    =& \int K\left(u \right) B(\rho, \hat{\rho}) r(\pi_1(s) + hu) \pi_0(\pi_1(s) + hu, s) d(u,s) \\
    =& \int K\left(u \right) du \int B(\rho, \hat{\rho}) r(\pi_1(s), s) \pi_0(\pi_1(s), s) ds + Rem(h) \\
    =& \int B(\rho, \hat{\rho}) r(\pi_1(s), s) \pi_0(\pi_1(s), s) ds + o(h^2) \\
    =& E_{\pi_0}[B(\rho, \hat{\rho}) r_{\pi_1}] + o(h^2)
\end{align*}

\subsubsection{Proof of Proposition~\ref{prop:bregmanvar}}

We consider the second moment of a single numerator term, and write the estimator in terms of $\rho$ and $\delta$ as above. We first consider the discrete action setting. 
\begin{align*}
    \mathbb{E}_{\pi_0}\left[\mathds{1}_a(\pi_1(s))^2 \hat{\rho}(a, s)^2 r(a, s)^2 \right] 
    &= \mathbb{E}_{\pi_0}\left[\mathds{1}_a(\pi_1(s)) (\rho(a, s) + \delta(a,s))^2 r(a, s)^2 \right] \\
    &= \mathbb{E}_{\pi_0}\left[\mathds{1}_a(\pi_1(s)) (\rho(a, s)^2 + \delta(a,s)^2 + 2\rho(a,s)\delta(a,s)) r(a, s)^2 \right] \\
    &= \sum_{(a,s)} \mathds{1}_a(\pi_1(s)) \rho(a, s)^2 r(a,s)^2 \pi_0(a,s) \\
    &+ \sum_{(a,s)} \mathds{1}_a(\pi_1(s)) \delta(a,s)^2 r(a,s)^2 \pi_0(a,s) \\ 
    &+ \sum_{(a,s)} \mathds{1}_a(\pi_1(s)) 2\rho(a,s)\delta(a,s) r(a,s)^2 \pi_0(a,s) \\
    &\le \sum_{s} \rho(\pi_1(s), s) r(\pi_1(s),s)^2 \pi_1(\pi_1(s),s) \\
    &+ \sum_{s} 2 B(\rho, \hat{\rho}) \rho(\pi_1(s),s) r(\pi_1(s),s)^2 \pi_0(\pi_1(s),s) \\
    &+ \sum_{s} B(\rho, \hat{\rho})^2 r(\pi_1(s),s)^2 \pi_0(\pi_1(s),s) \\
    &= \mathbb{E}_{\pi_1} [ \rho(\pi_1(s), s) r_{\pi_1}^2] + \mathbb{E}_{\pi_0} [B(\rho, \hat{\rho})^2 r_{\pi_1}^2] + \mathbb{E}_{\pi_0} [2 B(\rho, \hat{\rho}) \rho(\pi_1(s), s) r_{\pi_1}^2]
\end{align*}

Therefore, the variance of the estimator is bounded by:
$$
\cfrac{1}{n} \left( \mathbb{E}_{\pi_1} [ \rho(\pi_1(s), s) r_{\pi_1}^2] + \mathbb{E}_{\pi_0} [B(\rho, \hat{\rho})^2 r_{\pi_1}^2] + \mathbb{E}_{\pi_0} [2 B(\rho, \hat{\rho}) \rho(\pi_1(s), s) r_{\pi_1}^2] \right)
$$

Next, we consider the second moment of a term in the estimator in the continuous action setting:
\begin{align*}
    &\mathbb{E}_{\pi_0}\left[ \left( \cfrac{1}{h} K\left(\cfrac{a - \pi_1(s)}{h} \right) \hat{\rho}(a,s) r(a,s) \right)^2 \right]\\
    &=
    \int \cfrac{1}{h^2} K\left(\cfrac{a - \pi_1(s)}{h} \right)^2 \left(\rho(a, s) + \delta(a, s)\right)^2 r(a,s)^2 \pi_0(a,s) d(a,s)\\
\end{align*}

We substitute $u = \cfrac{a - \pi_1(s)}{h}$ as before. Then, $a = \pi_1(s) + hu$ and $da = h du$. 
\begin{align*}
    &\mathbb{E}_{\pi_0}\left[ \left( \cfrac{1}{h} K\left(\cfrac{a - \pi_1(s)}{h} \right) \hat{\rho}(a,s) r(a,s) \right)^2 \right]\\
    &=
    \int \cfrac{1}{h} K\left(u \right)^2 \left(\rho(\pi_1(s) + hu, s) + \delta(\pi_1(s) + hu, s)\right)^2 r(\pi_1(s) + hu)^2 \pi_0(\pi_1(s) + hu, s) d(s,u)
\end{align*}

Next, we apply a second-order Taylor series expansion of $\rho$, $\delta$, and $\pi_0$ around $\pi_1(s)$. 
Given that these functions have bounded second derivatives, we can bound the remainder by $o(h^{-1})$, as in~\citet{kallus2018policy}.
This yields:
\begin{align*}
    &\mathbb{E}_{\pi_0}\left[ \left( \cfrac{1}{h} K\left(\cfrac{a - \pi_1(s)}{h} \right) \hat{\rho}(a,s) r(a,s) \right)^2 \right]\\
    &= \int \cfrac{1}{h} K\left(u \right)^2 du \int \left(\rho(\pi_1(s), s) + \delta(\pi_1(s), s)\right)^2 r(\pi_1(s), s)^2 \pi_0(\pi_1(s), s) ds + o(h^{-1}) \\
    &= \cfrac{R(K)}{h} \int \left(\rho(\pi_1(s), s) + \delta(\pi_1(s), s)\right)^2 r(\pi_1(s), a)^2 \pi_0(\pi_1(s), s) ds + o(h^{-1}) \\
    &= \cfrac{R(K)}{h} \int \left( \rho(\pi_1(s), s)^2 + \delta(\pi_1(s), s)^2 + 2\rho(\pi_1(s), s)\delta(\pi_1(s), s) \right) r(\pi_1(s), a)^2 \pi_0(\pi_1(s), s) ds + o(h^{-1}) \\
    &= \cfrac{R(K)}{h} \left[ \int \rho(\pi_1(s), s)^2 r_{\pi_1}^2 \pi_0(\pi_1(s), s) ds + \int \delta(\pi_1(s), s)^2 r_{\pi_1}^2 \pi_0(\pi_1(s), s) ds + \int 2\rho(\pi_1(s), s)\delta(\pi_1(s), s) r_{\pi_1}^2 \pi_0(\pi_1(s), s) ds \right] \\&+ o(h^{-1}) \\
    &= \cfrac{R(K)}{h} \left[ \int \rho(\pi_1(s), s) r_{\pi_1}^2 \pi_1(\pi_1(s), s) ds + \int \delta(\pi_1(s), s)^2 r_{\pi_1}^2 \pi_0(\pi_1(s), s) ds + \int 2\delta(\pi_1(s), s) r_{\pi_1}^2 \pi_1(\pi_1(s), s) ds \right] \\&+ o(h^{-1})
\end{align*}
where $R(K) := \int K(u)^2 du$ is some constant. 

Then, bounding $\delta$ by the Bregman divergence $B$,
\begin{align*}
    &\mathbb{E}_{\pi_0}\left[ \left( \cfrac{1}{h} K\left(\cfrac{a - \pi_1(s)}{h} \right) \hat{\rho}(a,s) r \right)^2 \right]\\
    &\le \cfrac{R(K)}{h} \left[ \mathbb{E}_{\pi_1} [ \rho(\pi_1(s), s) r_{\pi_1}^2 ] + \mathbb{E}_{\pi_0} [B(\rho, \hat{\rho})^2 r_{\pi_1}^2] + \mathbb{E}_{\pi_1} [ 2 B(\rho, \hat{\rho}) r_{\pi_1}^2] \right] + o(h^{-1})
\end{align*}

Therefore, the variance of our estimator is bounded by:
$$
\cfrac{R(K)}{nh} \left( \mathbb{E}_{\pi_1} [ \rho(\pi_1(s), s) r_{\pi_1}^2 ] + \mathbb{E}_{\pi_0} [B(\rho, \hat{\rho})^2 r_{\pi_1}^2] + \mathbb{E}_{\pi_1} [ 2 B(\rho, \hat{\rho}) r_{\pi_1}^2] \right) + o\left( \cfrac{1}{nh} \right)
$$

\subsection{Evaluation details and full results} 
\label{app:simulations}

\begin{table*}[ht]
\caption{\label{tab:discdata} Summary of datasets used in discrete reward experiments}
\centering
\begin{tabular}{|l|l|l|l|l|l|l|l|l|l|}
\hline
Dataset          & ecoli & glass & letters & optdigits & page-blocks & pendigits & satimage & vehicle & yeast \\ \hline
Classes ($k$)      & 5     & 6     & 26     & 10        & 5           & 10        & 6        & 4       & 9     \\
Observations ($n$) & 327   & 214   & 20000  & 5620      & 5473        & 10992     & 6435     & 846     & 1479  \\
Covariates ($p$)   & 7     & 9     & 16     & 64        & 10          & 16        & 36       & 18      & 8     \\ \hline
\end{tabular}
\end{table*}

\begin{table*}[ht]
\caption{\label{tab:ctsdata} Summary of datasets used in continuous reward experiments}
\centering
\begin{tabular}{|l|l|l|l|l|l|l|l|}
\hline
Dataset          & abalone & admissions & airfoil & auto & housing & power & wine \\ \hline
Observations ($n$) & 4177    & 400        & 1503    & 392  & 10000   & 9568  & 1599 \\
Covariates ($p$)   & 10      & 7          & 5       & 7    & 14      & 4     & 11   \\ \hline
\end{tabular}
\end{table*}

Table~\ref{tab:disc_results} shows the results of the discrete treatment simulations.
Table~\ref{tab:cts_results} shows the results of the continuous treatment simulations.

\begin{sidewaystable}
\footnotesize
\begin{longtable}{lrrrrrrrrr}
\caption{\label{tab:disc_results}
\large Discrete evaluation\\ 
\small First row for each dataset is absolute bias, second is RMSE\\ 
} \\ 
\toprule
& & \multicolumn{4}{c}{IPS} & \multicolumn{4}{c}{\bope} \\ 
 \cmidrule(lr){3-6}\cmidrule(lr){7-10}
Dataset & Direct Method & IPS & Doubly Robust & SWITCH & SWITCH-DR & \bope & Doubly Robust & SWITCH & SWITCH-DR \\ 
\midrule
ecoli & $0.211$ & $0.016$ & $0.068$ & $0.098$ & $0.193$ & $0.030$ & $0.001$ & $0.074$ & $0.152$ \\ 
 & $0.220$ $\pm$ $0.006$ & $0.123$ $\pm$ $0.012$ & $0.189$ $\pm$ $0.015$ & $0.139$ $\pm$ $0.010$ & $0.203$ $\pm$ $0.006$ & $0.078$ $\pm$ $0.003$ & $0.112$ $\pm$ $0.009$ & $0.109$ $\pm$ $0.007$ & $0.166$ $\pm$ $0.006$ \\ 
glass & $0.300$ & $0.033$ & $0.092$ & $0.207$ & $0.290$ & $0.121$ & $0.128$ & $0.167$ & $0.235$ \\ 
 & $0.312$ $\pm$ $0.008$ & $0.207$ $\pm$ $0.012$ & $0.253$ $\pm$ $0.016$ & $0.267$ $\pm$ $0.015$ & $0.303$ $\pm$ $0.009$ & $0.174$ $\pm$ $0.008$ & $0.199$ $\pm$ $0.012$ & $0.214$ $\pm$ $0.012$ & $0.256$ $\pm$ $0.010$ \\ 
letters & $0.451$ & $0.010$ & $0.070$ & $0.435$ & $0.438$ & $0.028$ & $0.076$ & $0.010$ & $0.243$ \\ 
 & $0.451$ $\pm$ $0.002$ & $0.029$ $\pm$ $0.002$ & $0.097$ $\pm$ $0.006$ & $0.441$ $\pm$ $0.005$ & $0.439$ $\pm$ $0.003$ & $0.029$ $\pm$ $0.001$ & $0.081$ $\pm$ $0.003$ & $0.017$ $\pm$ $0.001$ & $0.245$ $\pm$ $0.003$ \\ 
optdigits & $0.217$ & $0.007$ & $0.047$ & $0.023$ & $0.179$ & $0.017$ & $0.051$ & $0.005$ & $0.128$ \\ 
 & $0.218$ $\pm$ $0.002$ & $0.031$ $\pm$ $0.004$ & $0.084$ $\pm$ $0.006$ & $0.027$ $\pm$ $0.002$ & $0.181$ $\pm$ $0.002$ & $0.018$ $\pm$ $0.000$ & $0.058$ $\pm$ $0.003$ & $0.010$ $\pm$ $0.001$ & $0.130$ $\pm$ $0.002$ \\ 
pageblocks & $0.016$ & $0.024$ & $0.034$ & $0.003$ & $0.005$ & $0.010$ & $0.011$ & $0.009$ & $0.003$ \\ 
 & $0.017$ $\pm$ $0.001$ & $0.035$ $\pm$ $0.004$ & $0.051$ $\pm$ $0.004$ & $0.008$ $\pm$ $0.000$ & $0.010$ $\pm$ $0.001$ & $0.015$ $\pm$ $0.001$ & $0.029$ $\pm$ $0.005$ & $0.012$ $\pm$ $0.001$ & $0.007$ $\pm$ $0.001$ \\ 
pendigits & $0.130$ & $0.001$ & $0.047$ & $0.008$ & $0.081$ & $0.008$ & $0.019$ & $0.000$ & $0.064$ \\ 
 & $0.130$ $\pm$ $0.001$ & $0.010$ $\pm$ $0.001$ & $0.062$ $\pm$ $0.004$ & $0.010$ $\pm$ $0.001$ & $0.081$ $\pm$ $0.001$ & $0.008$ $\pm$ $0.000$ & $0.025$ $\pm$ $0.001$ & $0.004$ $\pm$ $0.000$ & $0.065$ $\pm$ $0.001$ \\ 
sat & $0.052$ & $0.012$ & $0.013$ & $0.004$ & $0.020$ & $0.041$ & $0.003$ & $0.016$ & $0.025$ \\ 
 & $0.053$ $\pm$ $0.001$ & $0.023$ $\pm$ $0.001$ & $0.033$ $\pm$ $0.002$ & $0.015$ $\pm$ $0.001$ & $0.024$ $\pm$ $0.001$ & $0.043$ $\pm$ $0.001$ & $0.023$ $\pm$ $0.002$ & $0.020$ $\pm$ $0.001$ & $0.027$ $\pm$ $0.001$ \\ 
vehicle & $0.056$ & $0.000$ & $0.031$ & $0.017$ & $0.043$ & $0.112$ & $0.041$ & $0.014$ & $0.018$ \\ 
 & $0.069$ $\pm$ $0.003$ & $0.119$ $\pm$ $0.009$ & $0.136$ $\pm$ $0.011$ & $0.058$ $\pm$ $0.003$ & $0.056$ $\pm$ $0.003$ & $0.126$ $\pm$ $0.005$ & $0.079$ $\pm$ $0.005$ & $0.047$ $\pm$ $0.003$ & $0.043$ $\pm$ $0.003$ \\ 
yeast & $0.110$ & $0.037$ & $0.053$ & $0.060$ & $0.104$ & $0.083$ & $0.024$ & $0.039$ & $0.071$ \\ 
 & $0.120$ $\pm$ $0.005$ & $0.152$ $\pm$ $0.009$ & $0.178$ $\pm$ $0.012$ & $0.092$ $\pm$ $0.006$ & $0.114$ $\pm$ $0.005$ & $0.114$ $\pm$ $0.006$ & $0.095$ $\pm$ $0.007$ & $0.075$ $\pm$ $0.006$ & $0.085$ $\pm$ $0.005$ \\ 
\bottomrule
\end{longtable}
\end{sidewaystable}

\begin{sidewaystable}
\footnotesize
\begin{longtable}{lrrrrrrrrr}
\caption{\label{tab:cts_results}
\large Continuous evaluation\\ 
\small First row for each dataset is absolute bias, second is RMSE\\ 
} \\ 
\toprule
& & \multicolumn{4}{c}{IPS} & \multicolumn{4}{c}{\bope} \\ 
 \cmidrule(lr){3-6}\cmidrule(lr){7-10}
Dataset & Direct Method & IPS & Doubly Robust & SWITCH & SWITCH-DR & \bope & Doubly Robust & SWITCH & SWITCH-DR \\ 
\midrule
abalone & $0.406$ & $0.116$ & $1.107$ & $0.090$ & $0.389$ & $0.114$ & $0.275$ & $0.151$ & $0.312$ \\ 
 & $0.408$ $\pm$ $0.003$ & $1.070$ $\pm$ $0.134$ & $2.879$ $\pm$ $0.261$ & $0.298$ $\pm$ $0.024$ & $0.393$ $\pm$ $0.005$ & $0.130$ $\pm$ $0.003$ & $0.287$ $\pm$ $0.005$ & $0.160$ $\pm$ $0.005$ & $0.317$ $\pm$ $0.006$ \\ 
admissions & $0.345$ & $0.029$ & $0.318$ & $0.264$ & $0.343$ & $0.045$ & $0.029$ & $0.023$ & $0.151$ \\ 
 & $0.358$ $\pm$ $0.006$ & $0.394$ $\pm$ $0.032$ & $1.137$ $\pm$ $0.105$ & $0.304$ $\pm$ $0.014$ & $0.356$ $\pm$ $0.010$ & $0.170$ $\pm$ $0.011$ & $0.257$ $\pm$ $0.015$ & $0.103$ $\pm$ $0.008$ & $0.223$ $\pm$ $0.013$ \\ 
airfoil & $0.690$ & $0.030$ & $0.723$ & $0.000$ & $0.479$ & $0.009$ & $0.016$ & $0.006$ & $0.195$ \\ 
 & $0.693$ $\pm$ $0.003$ & $0.128$ $\pm$ $0.022$ & $1.517$ $\pm$ $0.124$ & $0.045$ $\pm$ $0.004$ & $0.522$ $\pm$ $0.016$ & $0.030$ $\pm$ $0.002$ & $0.266$ $\pm$ $0.019$ & $0.027$ $\pm$ $0.003$ & $0.236$ $\pm$ $0.011$ \\ 
auto & $0.396$ & $0.028$ & $0.355$ & $0.315$ & $0.399$ & $0.013$ & $0.012$ & $0.030$ & $0.148$ \\ 
 & $0.406$ $\pm$ $0.005$ & $0.324$ $\pm$ $0.040$ & $1.272$ $\pm$ $0.138$ & $0.336$ $\pm$ $0.011$ & $0.409$ $\pm$ $0.010$ & $0.071$ $\pm$ $0.005$ & $0.181$ $\pm$ $0.010$ & $0.058$ $\pm$ $0.004$ & $0.199$ $\pm$ $0.015$ \\ 
housing & $0.538$ & $0.091$ & $0.411$ & $0.057$ & $0.537$ & $0.111$ & $0.404$ & $0.116$ & $0.445$ \\ 
 & $0.542$ $\pm$ $0.004$ & $1.367$ $\pm$ $0.483$ & $3.668$ $\pm$ $0.590$ & $0.172$ $\pm$ $0.056$ & $0.541$ $\pm$ $0.007$ & $0.130$ $\pm$ $0.007$ & $0.445$ $\pm$ $0.007$ & $0.123$ $\pm$ $0.002$ & $0.456$ $\pm$ $0.008$ \\ 
power & $0.108$ & $0.012$ & $0.023$ & $0.003$ & $0.057$ & $0.010$ & $0.012$ & $0.003$ & $0.022$ \\ 
 & $0.109$ $\pm$ $0.001$ & $0.060$ $\pm$ $0.020$ & $0.138$ $\pm$ $0.012$ & $0.015$ $\pm$ $0.003$ & $0.062$ $\pm$ $0.002$ & $0.040$ $\pm$ $0.007$ & $0.066$ $\pm$ $0.005$ & $0.013$ $\pm$ $0.002$ & $0.036$ $\pm$ $0.003$ \\ 
wine & $0.483$ & $0.253$ & $1.502$ & $0.282$ & $0.473$ & $0.197$ & $0.165$ & $0.155$ & $0.433$ \\ 
 & $0.488$ $\pm$ $0.004$ & $0.825$ $\pm$ $0.097$ & $3.737$ $\pm$ $0.524$ & $0.487$ $\pm$ $0.016$ & $0.478$ $\pm$ $0.007$ & $0.205$ $\pm$ $0.003$ & $0.191$ $\pm$ $0.005$ & $0.188$ $\pm$ $0.008$ & $0.439$ $\pm$ $0.007$ \\ 
\bottomrule
\end{longtable}
\end{sidewaystable}



\subsection{Data sources}

The sources for the datasets used in the experiments, along with necessary citations, can be found below.

\begin{sidewaystable}
\begin{tabular}{|l|l|}
\hline
Dataset                               & URL                                                                                       \\ \hline
ecoli                                 & \url{https://archive.ics.uci.edu/ml/datasets/ecoli}                                       \\
glass                                 & \url{https://archive.ics.uci.edu/ml/datasets/glass+identification}                        \\
letters                               & \url{https://archive.ics.uci.edu/ml/datasets/letter+recognition}                          \\
optdigits                             & \url{https://archive.ics.uci.edu/ml/datasets/optical+recognition+of+handwritten+digits}   \\
page-blocks                           & \url{https://archive.ics.uci.edu/ml/datasets/Page+Blocks+Classification}                  \\
pendigits                             & \url{https://archive.ics.uci.edu/ml/datasets/Pen-Based+Recognition+of+Handwritten+Digits} \\
satimage                              & \url{https://archive.ics.uci.edu/ml/datasets/Statlog+(Landsat+Satellite)}                 \\
vehicle\footnote{~\citep{data:vehicle}}           & \url{https://archive.ics.uci.edu/ml/datasets/Statlog+(Vehicle+Silhouettes)}               \\
yeast                                 & \url{https://archive.ics.uci.edu/ml/datasets/Yeast}                                       \\ \hline
abalone                               & \url{https://archive.ics.uci.edu/ml/datasets/abalone}                                     \\
admissions\footnote{\citep{data:admissions}}     & \url{https://www.kaggle.com/mohansacharya/graduate-admissions}                            \\
airfoil                               & \url{https://archive.ics.uci.edu/ml/datasets/airfoil+self-noise}                          \\
auto                                  & \url{https://archive.ics.uci.edu/ml/datasets/auto+mpg}                                    \\
housing                               & \url{https://www.kaggle.com/harlfoxem/housesalesprediction}                               \\
power\footnote{\citep{data:power1, data:power2}} & \url{https://archive.ics.uci.edu/ml/datasets/combined+cycle+power+plant}                  \\
wine\footnote{\citep{data:wine}}                 & \url{https://archive.ics.uci.edu/ml/datasets/wine+quality}                                \\ \hline
\end{tabular}
\end{sidewaystable}

\end{document}